\documentclass{article}

\usepackage{microtype}
\usepackage{graphicx}
\usepackage{subcaption}
\usepackage{booktabs} 
\usepackage{enumitem}
\usepackage{pifont}
\usepackage{threeparttable}
\usepackage{booktabs}

\usepackage{hyperref}

\usepackage[preprint]{icml2026}

\usepackage{amsmath}
\usepackage{amssymb}
\usepackage{mathtools}
\usepackage{amsthm}
\usepackage{multirow}

\usepackage[capitalize,noabbrev]{cleveref}

\theoremstyle{plain}
\newtheorem{theorem}{Theorem}[section]

\newtheorem{lemma}[theorem]{Lemma}

\theoremstyle{definition}

\newtheorem{assumption}[theorem]{Assumption}
\theoremstyle{remark}

\newcommand{\tcut}{t_{\text{cutoff}}}

\usepackage[textsize=tiny]{todonotes}

\icmltitlerunning{
Efficient and Stable Reinforcement Learning for Diffusion Language Models}

\begin{document}

\twocolumn[
  \icmltitle{
  Efficient and Stable Reinforcement Learning for Diffusion Language Models}



  \icmlsetsymbol{equal}{*}

  \begin{icmlauthorlist}
    \icmlauthor{Jiawei Liu}{ustc,cityu}
    \icmlauthor{Xiting Wang}{ruc}
    \icmlauthor{Yuanyuan Zhong}{pku}
    \icmlauthor{Defu Lian}{ustc}
    \icmlauthor{Yu Yang}{cityu}
  \end{icmlauthorlist}

  \icmlaffiliation{ustc}{State Key Laboratory of Cognitive Intelligence, University of Science and Technology of China, Hefei, China}
  \icmlaffiliation{cityu}{City University of Hong Kong, Hong Kong, China}
  \icmlaffiliation{ruc}{Gaoling School of Artificial Intelligence, Renmin University of China, Beijing, China}
  \icmlaffiliation{pku}{School of Software and Microelectronics, Peking University, Beijing, China}

  \icmlcorrespondingauthor{Xiting Wang}{xitingwang@ruc.edu.cn}
  \icmlcorrespondingauthor{Defu Lian}{liandefu@ustc.edu.cn}
  \icmlcorrespondingauthor{Yu Yang}{yuyang@cityu.edu.hk}

  \icmlkeywords{Reinforcement Learning, Machine Learning, ICML}

  \vskip 0.3in
]



\printAffiliationsAndNotice{}  
 
\begin{abstract}
  Reinforcement Learning (RL) is crucial for unlocking the complex reasoning capabilities of Diffusion-based Large Language Models (dLLMs). However, applying RL to dLLMs faces unique challenges in efficiency and stability. To address these challenges, we propose Spatio-Temporal Pruning (STP), a framework designed to simultaneously improve the efficiency and stability of RL for dLLMs. STP compresses the redundancy in the generative process through: (1) \textit{spatial pruning}, which constrains the exploration space using static priors; and (2) \textit{temporal pruning}, which bypasses redundant late-stage refinement steps. Our theoretical analysis demonstrates that STP strictly reduces the variance of the log-likelihood estimation, thereby ensuring more stable policy updates. Extensive experiments demonstrate that STP surpasses state-of-the-art baselines in both efficiency and accuracy.  Our code is available at \url{https://github.com/Lolo1222/STP}.
\end{abstract}

\section{Introduction}
\label{sec:intro}
\newcommand{\cmark}{\ding{51}} 
\newcommand{\xmark}{\ding{55}} 

\begin{table}[t!]
\centering
\begin{threeparttable}
\caption{Comparison of RL algorithms for dLLMs. STP (Ours) achieves the best efficiency and accuracy empirically with a unique theoretical guarantee on stability. Symbols \textbf{\xmark}, \textbf{\cmark}, and \textbf{\cmark\cmark} denote \textit{limited}, \textit{good}, and \textit{state-of-the-art} performance, respectively.
}
\label{tab:comparison}
\begin{small}
\begin{tabular*}{\columnwidth}{@{\extracolsep{\fill}} lccc}
\toprule
Method & Efficiency & Stability & Accuracy \\
\midrule
Standard GRPO       & \xmark            & \cmark            & \cmark            \\
Diffu-GRPO          & \cmark            & \xmark            & \xmark            \\
UniGRPO             & \cmark            & \xmark            & \xmark            \\
\midrule
\textbf{STP (Ours)} & \textbf{\cmark\cmark}\tnote{1} & \textbf{\cmark\cmark}\tnote{2} & \textbf{\cmark\cmark}\tnote{3} \\
\bottomrule
\end{tabular*}
\end{small}

\begin{tablenotes}
    \footnotesize
    \item[1] \textbf{+13.1\%} training speedup over the fastest baseline. 
    \item[2] Theoretically provable variance reduction (see Section~\ref{sec:theory}).
    \item[3] Up to \textbf{81.7\%} relative improvement on logic reasoning tasks.
\end{tablenotes}
\end{threeparttable}
\end{table}

\begin{figure*}[t]
  \begin{center}
    \centerline{\includegraphics[width=\textwidth]{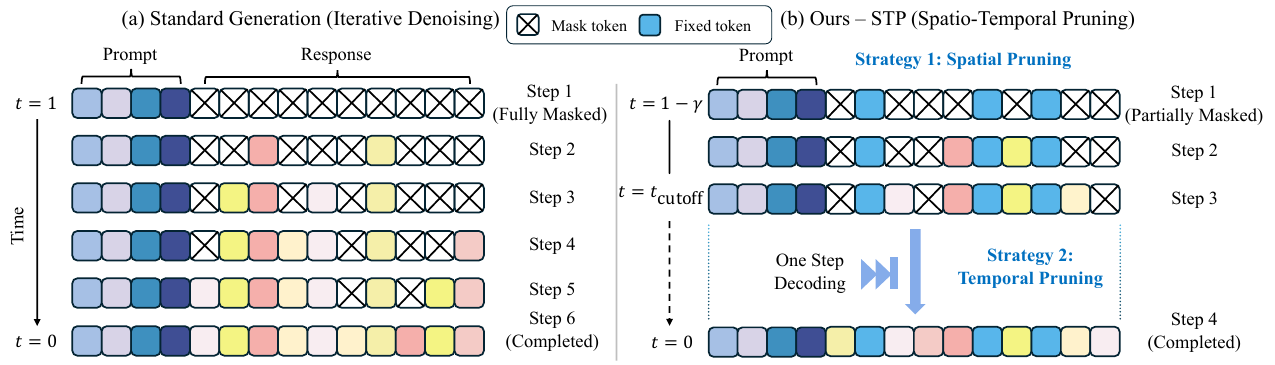}}
    \caption{
Comparing (a) standard sequence generation with iterative denoising with (b) our generation process with spatio-temporal purning, which reduces computational cost and ELBO variance theoretically while improving empirical accuracy. 
    }
    \label{fig1}
  \end{center}
\end{figure*}

Diffusion-based large language models (dLLMs) have emerged as an alternative paradigm for generative modeling~\cite{nie2025large, ouyour, yang2025mmada}. By iteratively refining sequences through a denoising process, dLLMs offer a non-autoregressive alternative that breaks the left-to-right sequential generation constraint inherent to traditional autoregressive (AR) models~\cite{lou2024discrete}.
Open-weight dLLMs like LLaDA~\cite{nie2025large} now demonstrate performance competitive with AR models of comparable scale, while closed-source models such as Mercury~\cite{khanna2025mercury} and Gemini Diffusion~\cite{gemini_diff} achieve more than an order-of-magnitude generation speed-up over AR models.


Applying reinforcement learning (RL) to dLLMs is essential for improving their reasoning abilities and ensuring safety~\cite{guo2025deepseek,wang2024comprehensive}.
However, the integration of RL with diffusion-based language models differs fundamentally from the autoregressive (AR) setting. In AR models, token probabilities are factorized along a single left-to-right generation order, enabling exact log-likelihood computation via the chain rule ~\cite{radford2018improving}. 
In contrast, diffusion language models generate sequences with bidirectional context, where tokens can be updated in multiple orders. This makes the exact likelihood computation -- which marginalizes over a combinatorial set of generation trajectories -- intractable and typically approximated via multi-step Monte Carlo sampling~\cite{zhao2025d1,zhu2025llada}. This reliance on estimation has led to two challenges for RL training:
\begin{itemize}[nosep,leftmargin=1em,labelwidth=*,align=left]
\item \textbf{C1. Efficiency.} Likelihood estimation relies on multi-step Monte Carlo sampling over diffusion timesteps, increasing the computational cost of trajectory generation.
\item  \textbf{C2. Stability.} Stochastic estimation over both diffusion time and token states introduces variance into the policy gradient estimator, which destabilizes the RL training process~\cite{zhao2025d1, wang2025spg}. 
\end{itemize}

As summarized in Table~\ref{tab:comparison}, existing RL methods for dLLMs fail to simultaneously achieve high computational efficiency and training stability. Standard GRPO is computationally expensive due to multi-step Evidence Lower Bound (ELBO) estimation. To mitigate this burden, efficiency-oriented approaches like Diffu-GRPO~\cite{zhao2025d1} and UniGRPO~\cite{yang2025mmada} employ a one-step denoising strategy to approximate token likelihoods. Although these methods improve efficiency, they introduce estimation variance, which leads to unstable gradient updates and suboptimal accuracy (see experimental results in Table~\ref{tab:main_results}). 
We propose an alternate method that improves the efficiency by reducing sampling redundancy across \textit{Spatial} (sequence length) and \textit{Temporal} (diffusion steps) dimensions. As shown in Table~\ref{tab:comparison}, our method achieves the best efficiency and has a theoretically guaranteed variance reduction, leading to better stability and state-of-the-art accuracy. Moreover, our method is complementary to current RL improvements and can be combined with them to achieve greater improvements (see experimental results in Table~\ref{tab:orthogonality}).
More specifically, our framework, \textbf{Spatio-Temporal Pruning (STP)}, consists of two pruning strategies as shown in Figure~\ref{fig1}:
(1) \textit{spatial pruning}, 
which incorporates the \textit{semi-offline} strategy~\cite{chen2023semi} by mixing offline data with online generation,
thereby constraining the search space to prevent inefficient exploration; and
(2) \textit{temporal pruning}, 
which leverages the non-autoregressive nature of dLLMs to compress redundant refinement steps in the late stage.
This is based on the insight that later steps are primarily dedicated to local syntactic refinement~\cite{huang2025reinforcing}, allowing for redundancy reduction without compromising generation quality.

Our main contributions are summarized as follows:
\begin{itemize}[nosep,leftmargin=1em,labelwidth=*,align=left]
    \item We propose {Spatio-Temporal Pruning (STP)}, which is, to the best of our knowledge, the first work to simultaneously optimize the computational efficiency and training stability of RL training for dLLMs.
    \item We provide a theoretical analysis demonstrating that STP not only reduces computational cost of trajectory sampling but also lowers the variance of the log-likelihood estimation, thereby enabling more stable and efficient reinforcement learning for dLLMs.
    \item 
    Extensive experiments demonstrate the effectiveness of STP: it reduces training time by 13.1\% vs. the fastest baseline (\textit{Diffu-GRPO}) while consistently outperforming the strongest baseline (\textit{GRPO w/ ELBO}), achieving steady gains on math benchmarks and up to 81.7\% relative improvement on logic benchmarks.
\end{itemize}

\section{Preliminaries}
\label{sec:preliminaries}
\subsection{GRPO for dLLMs}
Although our framework is suitable for various RL algorithms, we adopt Group Relative Policy Optimization (GRPO)~\cite{shao2024deepseekmath,guo2025deepseek} as our base RL algorithm for its memory efficiency and its prevalence in RL for dLLMs~\cite{zhao2025d1, yang2025mmada}.
\paragraph{Objective Formulation.}
GRPO samples a group of $G$ outputs $\{o_1, \dots, o_G\}$ from the old policy $\pi_{\theta_{\text{old}}}$ for each prompt. For each output $o_i$, a group-relative advantage is computed as $\hat{A}_i = (R(o_i) - \mu_R) / \sigma_R$, where $R$ is the reward function, $\mu_R$ and $\sigma_R$ are the mean and standard deviation of rewards within the group. The objective maximizes:
\begin{align}
\label{eq:grpo}
    & \mathcal{J}_{\mathrm{GRPO}}(\theta)  = \mathbb{E}_{\substack{ \{o_i\}_{i=1}^G \sim \pi_\text{old}}}
    \Bigg[ \left( \frac{1}{G} \sum_{i=1}^G \frac{1}{\left|o_i\right|} \sum_{k=1}^{\left|o_i\right|} \min \Bigl( \rho_i^k A_i^k, \right. \nonumber \\
    & \quad \left. \operatorname{clip}\left(\rho_i^k, 1-\varepsilon, 1+\varepsilon\right) A_i^k \Bigr) \right) 
    - \beta D_{\mathrm{KL}} \left[ \pi_\theta \| \pi_{\mathrm{ref}} \right] \Bigg]
\end{align}
where $\rho_i^k = \frac{\pi_\theta(o_i^k | c, o_i^{<k})}{\pi_{\theta_{\text{old}}}(o_i^k | c, o_i^{<k})}$ is the importance sampling ratio for the $k$-th token of output $i$, $\epsilon$ and $\beta$ are hyper-parameters, and $\pi_{\text{ref}}$ is a fixed reference policy. 
This objective encourages the policy to increase the probability of tokens that yield high advantages $A_i^k$, while the clipping mechanism and KL penalty jointly ensure training stability by constraining the policy update.
\subsection{Masked Diffusion Large Language Models} 
Masked dLLMs~\cite{austin2021structured, shi2024simplified, nie2025large} generate text ${x_0} = (x_0^1, \dots, x_0^L)$ of length $L$ through an iterative denoising process indexed by continuous time $t \in [0, 1]$. 
The forward process $q(x_t|x_0)$ independently masks tokens in the clean sequence $x_0$. 
Following the standard linear schedule~\cite{nie2025large, sahoo2024simple}, the conditional distribution for each token at time $t$ is given by:
\begin{equation}
    q(x_t^i| x_0^i) = \begin{cases}1-t, & x_t^i=x_0^i \\ t, & x_t^i=\texttt{[MASK]}\end{cases}.
\end{equation}
Thus, $t$ equals to the proportion of masks in the sequence, $x_1$ is a fully masked sequence, and $x_0$ is the clean data. The reverse generative process recovers $x_0$ from $x_1$ by iteratively sampling $x_{s}$ from $x_t$ ($s < t$) using a mask predictor $\pi_\theta(x_0|x_t)$.

\paragraph{Intractability and ELBO.}
Unlike AR models where likelihood is factorizable, the exact log-likelihood $\log \pi_\theta(x_0)$ of Masked dLLMs requires marginalizing over all possible high-dimensional latent trajectories $x_{1:T}$, which is computationally intractable~\cite{zhao2025d1, wang2025spg}.
Therefore, optimization is performed on the Evidence Lower Bound (ELBO), which serves as a tractable surrogate. For a given sequence $x_0$, the ELBO is defined as:
\begin{equation}
    \label{eq:elbo_def}
    \begin{aligned}
    \mathcal{B}_{\pi_\theta}(x_0) 
    \triangleq \mathbb{E}_{\substack{t \sim \mathcal{U}(0,1) \\ x_t \sim q(x_t|x_0)}} 
    \bigg[ \frac{1}{t} \sum_{i =1}^L &\mathbb{I}(x_t^i=\texttt{[MASK]}) \\
    &\cdot \log \pi_\theta(x_0^i | x_t) \bigg],
    \end{aligned}
\end{equation}
In practice, we expend considerable computation to mitigate the high variance inherent in its Monte Carlo estimation~\cite{nie2025large}.
\begin{figure}[t]
  \begin{center}
    \centerline{\includegraphics[width=\columnwidth]{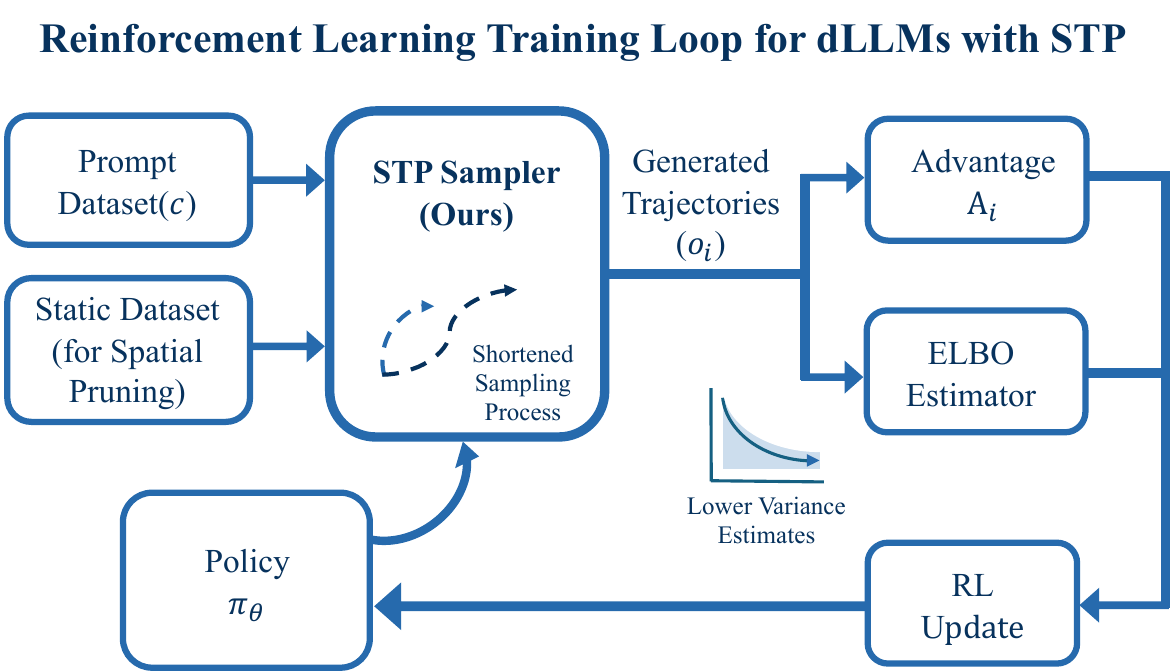}}
    \caption{
      Replacing standard sampling with STP accelerates trajectory generation for exploration. Furthermore, by constraining the sampling space, STP yields lower-variance ELBO estimates, facilitating stable policy updates.
    }
\label{fig2}
  \end{center}
\end{figure}

\paragraph{Applying GRPO to dLLMs}
For dLLMs, the final clean data $x_0$ obtained after the denoising process serves as the trajectory $o$ for the subsequent RL training.
The intractable token likelihoods in $\rho_i^k$ is approximated using the exponential of the token-wise ELBO derived from Eq.~\eqref{eq:elbo_def}:
\begin{equation}
    \rho_i^k(\theta) \approx \exp \left( \mathcal{B}_{\pi_\theta}(o_i^k) - \mathcal{B}_{\pi_{\theta_{\text{old}}}}(o_i^k) \right).
\end{equation}
As shown in Figure~\ref{fig2}, the ELBO estimator enables the application of policy gradient methods to diffusion models.
However, this approximation suffers from high variance, which destabilizes RL training.
This motivates us to modify the trajectory generation process to reduce variance by STP Sampler, leading to more stable
and efficient policy learning.
\section{Spatio-Temporal Pruning}
\label{sec:method}
The generative process of a dLLM can be formulated as a trajectory within a spatio-temporal lattice $\mathcal{L} \times \mathcal{T}$, where $\mathcal{L} = \{1, \dots, L\}$ denotes spatial token indices and $\mathcal{T} = [0, 1]$ represents diffusion time.
To accelerate this process and reduce variance, we propose \textbf{Spatio-Temporal Pruning (STP)}. 
STP unifies two pruning strategies corresponding to the two dimensions of the lattice: \textit{Spatial Pruning} (reducing the effective action space) and \textit{Temporal Pruning} (reducing the integration steps).
\subsection{Spatial Pruning}
\label{subsec:spatial_pruning}
Spatial pruning constrains the search space to prevent inefficient exploration. This is theoretically motivated by the \textit{Semi-Offline RL} paradigm~\cite{chen2023semi}, which theoretically demonstrates that mixing offline data with online generation can balance exploration efficiency and generation quality. Compared to AR models, this paradigm is particularly well-suited for dLLMs due to their inherent ability to handle internal masks.

We reformulate the denoising process in dLLMs from a single trajectory into a two-stage procedure, parameterized by the spatial pruning ratio $\gamma$ . Unlike the conventional schedule where timestep $t$ decreases uniformly from 1 to 0 in steps of $\Delta t$, the transition between our two stages is determined by $( (1 - \gamma)$.

\textbf{Stage 1} (Spatially-pruned Initialization, $1\geq t \geq \gamma$):
\begin{equation}
\label{sp}
    x_{t}^i = \begin{cases}\texttt{[MASK]}, & \text{if } e_i = 1 \\ x_\text{fixed}^i, & \text{if } e_i=0\end{cases}
\end{equation}
where $e_i \sim \text{Bernoulli}(\gamma)$ controls whether exploration is performed for the 
$i$-th token, and $x_{\text{fixed}}^i$ denotes an anchor sequence from a static dataset, which may come from reference solutions or prior model generations.

\textbf{Stage 2} (Spatially-pruned Denoising, $\gamma > t \geq 0$):
\begin{equation}
    x_t^i = \begin{cases}
    \hat{x}_t^i, & \text{if } e_i=1 \text{ and } m_i=0 \\ 
    \texttt{[MASK]}, & \text{if } e_i=1 \text{ and } m_i=1  \\
    x_\text{fixed}^i, & \text{if } e_i=0
    \end{cases}
\end{equation}
where $\hat{x}_t^i \sim \pi_\theta(\cdot|x_{t+\Delta t})$, $m_i = \mathbb{I}(i\in S_{maxconf})$, and $S_{maxconf} = argmax (\pi_\theta(\cdot|x_{t+\Delta t}))$ corresponds to the low-confidence decoding strategy~\cite{nie2025large}.
In the corresponding forward process used by the ELBO estimator,
the conditional distribution for each token at time $t$ is defined as:
\begin{equation}
    q_\text{SP}(x_t^i| x_0^i) = \begin{cases}
    1-t, & \text{if }i \notin S_\text{fixed} \text{ and } x_t^i=x_0^i \\ 
    t, & \text{if }i \notin S_\text{fixed} \text{ and } x_t^i=\texttt{[MASK]} \\
    1, & \text{if }i \in S_\text{fixed} \text{ and } x_t^i=x_\text{fixed}^i
    \end{cases}.
\end{equation}
where $S_{\text{fixed}}=\{i|e_i=0\}$. Thus $|S_{\text{fixed}}| =\gamma L$.

By fixing these tokens, we reduce the space of the policy $\pi_\theta$ from $\mathcal{V}^L$ to $\mathcal{V}^{L - |S_{\text{fixed}}|}$ and shorten the sequence length for ELBO estimation from $L$ to $L-|S_{\text{fixed}}|$. This accelerates the sampling process and \textbf{lowers the variance} of the estimator as derived in Theorem~\ref{thm:efficiency} and Theorem~\ref{thm:variance_reduction}.
\subsection{Temporal Pruning}
\label{subsec:temporal_pruning}
While spatial pruning operates on the spatial dimension, temporal pruning operates on the temporal dimension.
Standard diffusion is inherently a \textit{fully observable} Markovian process: the transition to state $x_{t-\Delta t}$ requires explicit observation of the immediate predecessor $x_t$. This necessitates a rigorous, step-by-step diffusion until $t=0$.

We propose accelerating late-stage denoising via a \textit{partially observable} process~\cite{chen2023semi} since the later steps are primarily dedicated to local syntactic refinement~\cite{huang2025reinforcing}. 
The intermediate states between a cutoff time $t_{\text{cutoff}}$ and $0$ are treated as unobserved variables, allowing the model to bypass the iterative chain and directly predict the final outcome.

Specially, we decompose the \textbf{Stage 2} in Section~\ref{subsec:spatial_pruning} into two sub-stages, controlled by a temporal cutoff 
$t_{\text{cutoff}} > 0$.

\textbf{Stage 2a} (Spatially-pruned Denoising, $\gamma > t \geq \tcut$):
\begin{equation}
    x_t^i = \begin{cases}
    \hat{x}_t^i, & \text{if } e_i=1 \text{ and } m_i=0 \\ 
    \texttt{[MASK]}, & \text{if } e_i=1 \text{ and } m_i=1  \\
    x_\text{fixed}^i, & \text{if } e_i=0
    \end{cases}
\end{equation}
is equivalent to the undivided Stage 2.

\textbf{Stage 2b} (Temporally-pruned Denoising, $\tcut > t \geq 0$):
\begin{equation}
    x_t^i = \begin{cases}
    \tilde{x}_{t}^i, & \text{if } e_i=1 \\ 
    x^i_\text{fixed}, & \text{if } e_i=0  \\
    \end{cases}
\end{equation}
where $\tilde{x}_{t}^i = argmax (\pi_\theta(\cdot|x_{\tcut}))$.

This design leverages the inherent advantage of the masked dLLM architecture, which enables the simultaneous decoding of tokens at all positions in one single forward pass.
By pruning the redundant steps $t \in (t_{\text{cut}}, 0]$, we reduce the computational cost of the trajectory sampling from $T$ steps to $T \times (1-t_{\text{cut}})$ steps.

\section{Theoretical Analysis}
\label{sec:theory}

In this section, we provide a theoretical grounding for Spatio-Temporal Pruning (STP). We analyze its impact on sampling efficiency and the variance of the Evidence Lower Bound (ELBO) estimator. Furthermore, we explicitly connect these properties to the stability of GRPO training.

\subsection{Sampling Efficiency}
The standard sampling process for a dLLM requires $N$ iterative denoising steps to transition from $t=1$ to $t=0$. Let $\mathcal{C}_{\text{step}}$ denote the computational cost of a single forward pass (which is roughly constant).
\begin{theorem}
    \label{thm:efficiency}
    (Computational Complexity Reduction) Let $N$ be the total number of diffusion steps, $\gamma$ be the spatial pruning ratio, and $\tcut \in (0, 1)$ be the temporal pruning cutoff.
    The standard sampling cost is $\mathcal{C}_{\text{std}} = N \cdot \mathcal{C}_{\text{step}}$. Under STP, the sampling cost is reduced to:
    \begin{equation}
        \mathcal{C}_{\text{STP}} \approx \left(  (1 - \gamma - \tcut) \cdot N + 1 \right) \cdot \mathcal{C}_{\text{step}}.
    \end{equation}
\end{theorem}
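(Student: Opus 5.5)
The argument is a step count. I would partition the diffusion time interval $[0,1]$ according to the stages of STP in Section~\ref{sec:method}, count the forward passes of $\pi_\theta$ that each stage needs under the uniform discretization with $\Delta t = 1/N$, and then add the counts. The standard sampler spends one forward pass per grid point, so $\mathcal{C}_{\text{std}} = N\cdot\mathcal{C}_{\text{step}}$. That baseline also fixes the accounting convention used below: the cost of a time segment is its length multiplied by $N$, rounded to an integer, which is where the ``$\approx$'' in the statement comes from.

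\textbf{Stage 1} ($1\ge t\ge\gamma$) requires no network evaluation. Eq.~\eqref{sp} builds $x_\gamma$ directly by masking the exploration positions and copying $x_{\text{fixed}}^i$ into the remaining ones. The only randomness is the Bernoulli draws $e_i$, whose cost is negligible next to $\mathcal{C}_{\text{step}}$. This stage therefore costs $0$, and the first $\gamma N$ steps of the standard schedule are removed. \textbf{Stage 2a} ($\gamma>t\ge\tcut$) is ordinary low-confidence denoising on the same grid. Each step calls $\pi_\theta(\cdot\mid x_{t+\Delta t})$ once, so the stage costs roughly $(\gamma-\tcut)N\cdot\mathcal{C}_{\text{step}}$. \textbf{Stage 2b} ($\tcut>t\ge0$) uses the single argmax prediction $\tilde{x}^i=\arg\max\pi_\theta(\cdot\mid x_{\tcut})$ at every exploration position. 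That is exactly one forward pass, costing $\mathcal{C}_{\text{step}}$, and it is where the ``$+1$'' comes from.

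Adding the three stages gives $\big((\gamma-\tcut)N+1\big)\mathcal{C}_{\text{step}}$. The main obstacle is making this match the stated form $(1-\gamma-\tcut)N+1$. The two agree only if the stage boundary is taken at $1-\gamma$ rather than $\gamma$. The paper itself says the transition ``is determined by $(1-\gamma)$'', and Section~\ref{subsec:spatial_pruning} has $|S_{\text{fixed}}|=\gamma L$, which is the fraction of positions that are \emph{not} masked. I would resolve the inconsistency explicitly. With $\gamma$ read as the fixed fraction, the initial state $x_{\text{init}}$ has a mask proportion of $1-\gamma$. Because $t$ equals the mask proportion under the linear schedule, this state sits at time $1-\gamma$, so Stage 1 jumps from $t=1$ to $t=1-\gamma$ for free. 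Stage 2a then covers $[\tcut,1-\gamma)$ and costs $(1-\gamma-\tcut)N$ passes.

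For this to make sense I would assume $\tcut<1-\gamma$; otherwise Stage 2a is empty and the cost is just $\mathcal{C}_{\text{step}}$. I would also note that the approximation absorbs two things: the rounding of $(1-\gamma-\tcut)N$ to an integer, and the stochasticity of the Bernoulli mask, whose mask fraction concentrates at $1-\gamma$. To finish, I would compare with $\mathcal{C}_{\text{std}}$ and observe that the ratio is approximately $1-\gamma-\tcut+1/N<1$.
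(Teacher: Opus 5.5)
Your step count is correct and is essentially the argument the paper relies on: the paper gives no separate proof of this theorem, only the stage descriptions in Section~\ref{sec:method} (Stage~1 needs no forward passes, Stage~2b is a single pass), and counting passes stage by stage as you do yields $(1-\gamma-\tcut)N+1$. Placing the stage boundary at $t=1-\gamma$, with $\gamma$ the fixed fraction as in $|S_{\text{fixed}}|=\gamma L$ and the experimental setup, is the reading the stated formula requires. This reading also silently replaces the $e_i\sim\text{Bernoulli}(\gamma)$ of Eq.~\eqref{sp} by $\text{Bernoulli}(1-\gamma)$, and you should state that replacement explicitly.
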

This linear reduction in temporal complexity directly translates to wall-clock speedup, as sampling time is dominated by the number of forward passes.

\subsection{Variance Reduction of the ELBO Estimator}
A core contribution of STP is the stabilization of the RL training signal. The ELBO estimator $\hat{\mathcal{B}}_{\pi_\theta}(y)$ used in optimization is stochastic due to the sampling of time steps $t$ and the noise in the diffusion process.

\begin{assumption}
    \label{assumption:elbo}
(Boundedness of ELBO Estimator) We assume the ELBO estimator difference $\Delta \hat{\mathcal{B}} = \hat{\mathcal{B}}_{\pi_\theta}(y) - \hat{\mathcal{B}}_{\pi_{\text{old}}}(y)$ is bounded. Specifically, there exists a constant $K > 0$ such that $|\Delta \hat{\mathcal{B}}| \leq K$.
\end{assumption}
This assumption is reasonable because $\pi_\text{old}$ and $\pi_\theta$ are separated by only a few training steps, and constraints like clipping and KL regularization explicitly ensure their closeness.
Based on this assumption, we analyze the variance reduction of ELBO estimator via STP.
\begin{theorem}
    \label{thm:variance_reduction}
    (Variance Reduction in ELBO) Let $\hat{\mathcal{B}}^{STP}_{\pi_\theta}(y)$ denote the ELBO estimator computed under the STP framework. The variance of the estimator is strictly bounded by the variance of the standard estimator:
    \begin{equation}
        \mathbb{V}[\hat{\mathcal{B}}^{STP}_{\pi_\theta}(y)] < \mathbb{V}[\hat{\mathcal{B}}^{standard}_{\pi_\theta}(y)].
    \end{equation}
\end{theorem}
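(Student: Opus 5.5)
We will prove the theorem with a law-of-total-variance decomposition. The ELBO estimator is a sum of per-token contributions, and STP removes the randomness coming from the fixed positions.

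Write the single-sample standard estimator as
\begin{equation*}
\hat{\mathcal{B}}^{standard} = \frac{1}{t}\sum_{i=1}^{L} Z_i, \qquad Z_i = \mathbb{I}(x_t^i=\texttt{[MASK]})\,\log \pi_\theta(y^i\mid x_t).
\end{equation*}
Here $t\sim\mathcal{U}(0,1)$ and $x_t\sim q(\cdot\mid y)$. Under $q_{\text{SP}}$, each position $i\in S_{\text{fixed}}$ is deterministically unmasked. Its indicator is therefore identically zero, so it contributes nothing to the sum, and its conditioning context is fixed.

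Hence
\begin{equation*}
\hat{\mathcal{B}}^{STP} = \frac{1}{t}\sum_{i\notin S_{\text{fixed}}} Z_i^{SP},
\end{equation*}
a sum over only $(1-\gamma)L$ random masking indicators. We split the standard estimator as $\hat{\mathcal{B}}^{standard} = U + W$. The term $U$ collects the terms indexed by $i\notin S_{\text{fixed}}$. The term $W$ collects the terms indexed by $i\in S_{\text{fixed}}$.

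Step 1: identify the STP estimator with a conditional version of $U$. Condition on the masking pattern of the fixed positions being "all revealed". More generally, we couple the two processes by drawing the same $t$ and the same Bernoulli masks on $\mathcal{L}\setminus S_{\text{fixed}}$. Under this coupling, $\hat{\mathcal{B}}^{STP}$ equals the standard estimator with the masking randomness on $S_{\text{fixed}}$ removed. That is the idea behind a Rao--Blackwell / total-variance argument:
\begin{equation*}
\mathbb{V}[X] = \mathbb{E}\bigl[\mathbb{V}[X\mid \mathcal{F}]\bigr] + \mathbb{V}\bigl[\mathbb{E}[X\mid\mathcal{F}]\bigr].
\end{equation*}
Here $\mathcal{F}$ is generated by $t$ and the masks off $S_{\text{fixed}}$. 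We will show $\mathbb{V}[\hat{\mathcal{B}}^{STP}] \le \mathbb{V}[U]$ and $\mathbb{V}[\hat{\mathcal{B}}^{standard}] = \mathbb{V}[U]+\mathbb{V}[W]+2\,\mathrm{Cov}(U,W)$.

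Step 2: argue that the extra part is strictly positive. Conditioned on $t$, the indicators on $S_{\text{fixed}}$ are independent Bernoulli$(t)$. The log-probabilities are strictly negative whenever the model is not deterministic, so $\mathbb{E}[\mathbb{V}[W\mid t]]>0$. This is where the factor $1/t$ makes the variance large near $t\to 0$. Assumption~\ref{assumption:elbo} (boundedness) is used to guarantee that all these second moments are finite, so the decomposition is legitimate. In the ratio setting, the same argument applies to $\Delta\hat{\mathcal{B}}$, where boundedness by $K$ gives finite variance directly.

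Step 3, the main obstacle: control the covariance $\mathrm{Cov}(U,W)$, and the fact that the contexts $x_t$ differ, since in STP the fixed tokens are visible in the context of the explored ones. The cleanest route is to treat the per-token terms as approximately conditionally independent given $t$. For example, we could assume the mask predictor's log-probabilities vary little with the unmasking of other positions, or argue that the terms are nonnegatively correlated because they share the common $1/t$ scaling and sign.

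With nonnegative covariance, $\mathbb{V}[\hat{\mathcal{B}}^{standard}] \ge \mathbb{V}[U]+\mathbb{V}[W] > \mathbb{V}[U]$. We would then argue $\mathbb{V}[\hat{\mathcal{B}}^{STP}]\le \mathbb{V}[U]$, because revealing more context only sharpens the predictions (lower conditional entropy, hence less spread in $\log\pi_\theta$). That comparison is heuristic. If it resists proof, I would state it as a mild additional regularity condition. The core strict inequality then follows from removing the strictly positive variance contribution of the $\gamma L$ pruned positions, which scales roughly linearly in the number of stochastic terms.
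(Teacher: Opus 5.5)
Your route is the paper's route. The estimator is a sum of per-token terms, spatial pruning deletes the $\gamma L$ stochastic terms on $S_{\text{fixed}}$, and those terms carry positive variance. The paper splits the variance by the law of total variance over $t$ and the masks. It then asserts, ``assuming token-wise losses have bounded covariance,'' that the conditional variance shrinks by the factor $L'/L$. It also says explicitly that temporal pruning leaves the estimator unchanged for a fixed $y$ (TP tokens are scored like any other generated token), so the whole reduction comes from spatial pruning. You should state that step, since you silently identify $\hat{\mathcal{B}}^{STP}$ with the spatially pruned estimator.

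The two issues you isolate in Step 3 are the sign of $\mathrm{Cov}(U,W)$, and the fact that under STP the anchors are always visible, so the surviving terms are not the random variables in $U$. These are exactly what the paper's ``$\approx$'' skips: bounded covariances alone do not give the $L'/L$ scaling, and the $t$-sampling term is never examined. Your justifications do not close them either.
\begin{itemize}
\item Given $t$, the factor $1/t$ is a constant, and a shared sign says nothing about covariance.
\item Lower predictive entropy does not imply a smaller variance of $\log\pi_\theta(y^i\mid x_t)$ for a fixed target token.
\item The Rao--Blackwell framing of Step 1 does not apply. Under your coupling, $\hat{\mathcal{B}}^{STP}$ is the standard integrand evaluated with the anchors revealed, not $\mathbb{E}[\hat{\mathcal{B}}^{standard}\mid\mathcal{F}]$. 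It even has a different mean, namely the ELBO of the explored tokens given the anchors.
\item Conditioning on the event that the anchors are revealed does not reproduce STP either, since it tilts the law of $t$ by $(1-t)^{\gamma L}$.
\end{itemize}
So these have to be hypotheses. The simplest sufficient one is an idealization that makes the mechanism exact: a masked token's log-probability does not depend on which other positions are masked, $\log\pi_\theta(y^i\mid x_t)=a_i$. Then the coupling gives $\hat{\mathcal{B}}^{STP}=U$ exactly, and $U$ and $W$ are conditionally independent given $t$ with $t$-free conditional means. For any law of $t$ with $\mathbb{E}[1/t]<\infty$,
\[
\mathbb{V}[\hat{\mathcal{B}}^{standard}]-\mathbb{V}[\hat{\mathcal{B}}^{STP}]=\mathbb{E}\Bigl[\tfrac{1-t}{t}\Bigr]\sum_{i\in S_{\text{fixed}}}a_i^2 ,
\]
which is the exact form of the paper's $L'/L$ heuristic.

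The condition $\mathbb{E}[1/t]<\infty$ points to the one step that is wrong rather than merely unproven. Assumption~\ref{assumption:elbo} bounds the difference $\Delta\hat{\mathcal{B}}$, not $\hat{\mathcal{B}}_{\pi_\theta}(y)$, so it cannot make the second moments in your decomposition finite. With $t\sim\mathcal{U}(0,1)$ as in Eq.~\eqref{eq:elbo_def}, they are not finite. On the event that only position $i$ is masked (probability $t(1-t)^{L-1}$), the estimator equals $c_i/t$, where $c_i$ is the log-probability of $y^i$ with every other token revealed. Hence
\[
\mathbb{E}\bigl[(\hat{\mathcal{B}}^{standard})^2\bigr]\ \ge\ \sum_{i=1}^{L}c_i^2\int_0^1\frac{(1-t)^{L-1}}{t}\,dt=\infty
\]
whenever some $c_i\neq 0$, while the mean stays finite for bounded log-probabilities. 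The same computation over $i\notin S_{\text{fixed}}$, with $L'$ in place of $L$, makes $\mathbb{V}[\hat{\mathcal{B}}^{STP}]$ infinite too. So, as literally stated, both sides are $+\infty$ and the strict inequality fails. Your remark that $1/t$ inflates the variance near $t=0$ is pointing at this, but it cuts against the theorem rather than being cured by the assumption. If you truncate $t$ to $[\delta,1]$ and adopt an explicit hypothesis like the one above, your $U+W$ decomposition becomes a rigorous proof of the strict inequality, which is more than the paper's argument delivers.
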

This theorem guarantees that STP reduces the variance of ELBO estimation, thereby stabilizing RL training.
The proof is provided in Appendix~\ref{app:proofs}.

\subsection{Impact on GRPO Stability}
GRPO relies on importance sampling to estimate policy gradients off-policy. 
In Masked dLLMs, the likelihood ratio is approximated via ELBOs: $\rho_i \approx \exp({\mathcal{B}}_{\pi_\theta}(o_i) - {\mathcal{B}}_{\pi_{\theta_{old}}}(o_i))$.
Let $\hat{\mathcal{L}}_{\text{GRPO-E}}$ be the empirical GRPO loss using ELBO estimates. Let  $\Delta \hat{\mathcal{B}} = \hat{\mathcal{B}}_{\pi_\theta} - \hat{\mathcal{B}}_{\pi_{\text{old}}}$  be the random variable representing the estimator difference, and $\Delta \mathcal{B}$ be the true ELBO difference. 
\begin{theorem}
    \label{theorem:grpo_impact}
    (Bias and Variance Reduction in GRPO) Under Assumption ~\ref{assumption:elbo}, the bias and variance induced by the ELBO estimation are strictly bounded by the variance of the ELBO difference estimator $\mathbb{V}[\Delta \hat{\mathcal{B}}]$:

\begin{equation}
\begin{split}
    \left| \mathbb{E}[\hat{\mathcal{L}}_{\text{GRPO-E}}] - \mathcal{L}_{\text{GRPO-E}} \right| \leq C_1 \cdot \mathbb{E}_{data}[\mathbb{V}[\Delta \hat{\mathcal{B}}]] + \\
    C_2 \cdot \mathbb{E}_{data}[\sqrt{\mathbb{V}[\Delta \hat{\mathcal{B}}]}]
\end{split}    
\end{equation}

$$\mathbb{V}[\hat{\mathcal{L}}_{\text{GRPO-E}}] \leq V_{\text{data}} + C_3 \cdot \mathbb{E}_{data}[\mathbb{V}[\Delta \hat{\mathcal{B}}]]$$

where $C_1, C_2, C_3$ are constants depending on the advantage $A$ and the bound $K$.
\end{theorem}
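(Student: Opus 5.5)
The plan is to treat the GRPO loss as a function of the ELBO difference through the map $f(\Delta) = \min(e^{\Delta}A,\ \operatorname{clip}(e^{\Delta},1-\varepsilon,1+\varepsilon)A)$. We then do a second-order expansion of $f$ around the true difference $\Delta\mathcal{B}$ and take expectations over the estimator noise.

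\textbf{Setup.} By Assumption~\ref{assumption:elbo}, $\Delta\hat{\mathcal{B}}$ lies in $[-K,K]$, so $e^{\Delta}\le e^K$ and the integrand is bounded. For a fixed data point (prompt, group, advantage $A$), write
\[
\Delta\hat{\mathcal{B}} = \Delta\mathcal{B} + \xi .
\]
Here the mean $\mathbb{E}[\xi]=b$ is the estimator's bias relative to the true ELBO difference, and $\mathbb{V}[\xi]=\mathbb{V}[\Delta\hat{\mathcal{B}}]$. If the estimator is unbiased we set $b=0$; otherwise we carry $b$ along.

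\textbf{Bias bound.} The map $f$ is piecewise smooth with at most two kinks, at $\log(1\pm\varepsilon)$. I would avoid Taylor expansion across kinks and use a Lipschitz-plus-curvature decomposition instead. The exponential part $g(\Delta)=Ae^{\Delta}$ satisfies
\[
g(\Delta\mathcal{B}+\xi) = g(\Delta\mathcal{B}) + g'(\Delta\mathcal{B})\,\xi + \tfrac12 g''(\eta)\,\xi^2 ,
\]
with $|g''|\le |A|e^K$. Taking expectations gives
\[
|\mathbb{E}g - g| \le |A|e^K\bigl(|b| + \tfrac12\mathbb{E}\xi^2\bigr).
\]
The clipping and the min then introduce a further error. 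This error is controlled by the Lipschitz constant $|A|e^K$ times $\mathbb{E}|\xi-b|$. By Jensen, $\mathbb{E}|\xi-b|\le\sqrt{\mathbb{V}[\Delta\hat{\mathcal{B}}]}$, which supplies the $C_2\sqrt{\mathbb{V}}$ term.

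The second moment satisfies $\mathbb{E}\xi^2=\mathbb{V}+b^2$. Treating $b$ as zero, or absorbing it under the assumption that the ELBO estimator is unbiased for the surrogate (so that $\mathcal{L}_{\text{GRPO-E}}$ is defined with the expected ELBO), yields the $C_1\mathbb{V}$ term with $C_1=\tfrac12|A|e^K$. Averaging over tokens and the group, and then taking $\mathbb{E}_{data}$, gives the stated form with $C_1, C_2$ depending on $\max|A|$ and $K$.

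\textbf{Variance bound.} I would use the law of total variance, conditioning on the data:
\[
\mathbb{V}[\hat{\mathcal{L}}] = \mathbb{V}_{data}\bigl[\mathbb{E}[\hat{\mathcal{L}}\mid data]\bigr] + \mathbb{E}_{data}\bigl[\mathbb{V}[\hat{\mathcal{L}}\mid data]\bigr].
\]
The first term is $V_{\text{data}}$, by definition or up to the bias correction above. For the second term, $f$ is $|A|e^K$-Lipschitz in $\Delta$. Since $\mathbb{V}[h(X)]\le \mathrm{Lip}(h)^2\,\mathbb{V}[X]$, we get
\[
\mathbb{V}[f(\Delta\hat{\mathcal{B}})\mid data] \le A^2 e^{2K}\,\mathbb{V}[\Delta\hat{\mathcal{B}}].
\]
For the average over the $G$ samples and their tokens, Cauchy--Schwarz bounds the variance of the mean by the mean of the variances. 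This gives $C_3=\max A^2 e^{2K}$.

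\textbf{Main obstacle.} The delicate step is the bias bound's dependence on $b$ and on the non-smooth clip/min. The stated bound has no separate bias term, so I must interpret $\mathcal{L}_{\text{GRPO-E}}$ as evaluated at the mean ELBO difference, i.e.\ $b=0$. The kinks of $f$ are what force the first-order $\sqrt{\mathbb{V}}$ term rather than a purely quadratic one. I would handle this by splitting $f$ as the smooth exponential part plus a Lipschitz correction whose expected deviation is bounded in $L^1$ by the standard deviation.
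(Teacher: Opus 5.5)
Your proposal is correct. The variance half is the paper's argument (law of total variance, then $\mathbb{V}[h(X)]\le \mathrm{Lip}(h)^2\,\mathbb{V}[X]$), with two small differences. You use the composite constant $|A|e^K$ directly in $\Delta$-space, where the paper chains Lemma~\ref{lemma2} ($|A|$-Lipschitz in $r$) with Lemma~\ref{lemma1} ($\mathbb{V}[e^{X}]\le e^{2K}\mathbb{V}[X]$). You also justify the token/group average via $\mathbb{V}[\frac1n\sum_i X_i]\le \frac1n\sum_i \mathbb{V}[X_i]$, which the paper skips by treating a single sample.

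The bias half uses a different decomposition. The paper stays in ratio space and inserts the intermediate point $L(\mathbb{E}[\hat r])$. The Jensen gap of the clipped objective then gives $|A|\sqrt{\mathbb{V}[\hat r]}\le |A|e^K\sqrt{\mathbb{V}[\Delta\hat{\mathcal{B}}]}$. The quadratic term is $|A|$ times the exponential bias $|\mathbb{E}[e^{\Delta\hat{\mathcal{B}}}]-e^{\Delta\mathcal{B}}|\le \tfrac{e^K}{2}\mathbb{V}[\Delta\hat{\mathcal{B}}]$. You instead split the integrand additively as $Ae^{\Delta}+h(\Delta)$, with $h(\Delta)=\min\{0,\,A(\operatorname{clip}(e^{\Delta},1-\varepsilon,1+\varepsilon)-e^{\Delta})\}$. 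You Taylor-expand only the smooth part and use only a Lipschitz bound on $h$.

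Both routes rely on the same unbiasedness ($b=0$) that the paper asserts by citation. Both give the same constants $C_1=\tfrac12|A|e^K$, $C_2=|A|e^K$, $C_3=A^2e^{2K}$. The paper's split names the two error sources (clip nonlinearity versus bias of the importance ratio) and reuses one lemma for both claims. Yours shows that the $\sqrt{\mathbb{V}}$ term comes only from $h$, which is identically zero on $[\log(1-\varepsilon),\log(1+\varepsilon)]$. So the bias is purely $O(\mathbb{V}[\Delta\hat{\mathcal{B}}])$ whenever clipping never activates.

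Finally, as your closing remark suggests, neither split is needed. The full integrand $f$ is $|A|e^K$-Lipschitz on $[-K,K]$, and $\Delta\mathcal{B}=\mathbb{E}[\Delta\hat{\mathcal{B}}]\in[-K,K]$. Hence $|\mathbb{E}f(\Delta\hat{\mathcal{B}})-f(\Delta\mathcal{B})|\le |A|e^K\,\mathbb{E}|\Delta\hat{\mathcal{B}}-\Delta\mathcal{B}|\le |A|e^K\sqrt{\mathbb{V}[\Delta\hat{\mathcal{B}}]}$ in one line, and the statement holds even with $C_1=0$.
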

The full proof is provided in Appendix~\ref{proof2}.
This confirms that reducing the variance of the ELBO estimator ($\mathbb{V}[\Delta \hat{\mathcal{B}}]$) directly tightens the bounds on both bias and variance of the GRPO objective.

High variance in ELBO estimation leads to an overestimation of the importance weights, potentially causing exploding gradients and unstable updates in GRPO. By reducing $\mathbb{V}[\hat{\mathcal{B}}]$ via Theorem \ref{thm:variance_reduction}, STP minimizes this bias and the variance of the gradient estimator, thereby enabling more stable and efficient reinforcement training.

\section{Experiments}
\label{sec:experiments}
\begin{table*}[t!]
  \caption{Main results on reasoning benchmarks. We report the test accuracy (Acc., \%) and total training wall-clock time (Time, s). The base model is LLaDA-8B-Instruct. \textbf{Bold} indicates the best performance. The last row shows the relative accuracy gain and relative time reduction of our method compared to the best baseline.}
  \label{tab:main_results}
  \centering
  \begin{small}
  \begin{sc}
  \begin{tabular}{lcccccc}
  \toprule
  \multicolumn{1}{c}{\multirow{2}{*}{Methods}} & \multicolumn{2}{c}{MATH} & \multicolumn{2}{c}{GSM8K} & \multicolumn{2}{c}{Countdown} \\ 
  \cmidrule(lr){2-3} \cmidrule(lr){4-5} \cmidrule(lr){6-7}
   & Acc. & Time (s) & Acc. & Time (s) & Acc. & Time (s) \\ 
  \midrule
  LLaDA-8B-Instruct & 32.80\% & - & 77.79\% & - & 16.80\% & - \\
  GRPO w/ ELBO & 34.20\% & 126,045 & 80.52\% & 163,035 & 36.33\% & 164,657 \\
  Diffu-GRPO & 32.80\% & 113,358 & 80.21\% & 146,641 & 25.39\% & 147,100 \\ 
  \midrule
  \textbf{STP (Ours)} & \textbf{36.20\%} & \textbf{98,537} & \textbf{80.97\%} & \textbf{144,536} & \textbf{66.02\%} & \textbf{142,055} \\ 
  \textit{Improv. (vs. Best Baseline)} & \textcolor{teal}{+5.85\%} & \textcolor{teal}{-13.07\%} & \textcolor{teal}{+0.56\%} & \textcolor{teal}{-1.44\%} & \textcolor{teal}{+81.72\%} & \textcolor{teal}{-3.43\%} \\
  \bottomrule
  \end{tabular}
  \end{sc}
  \end{small}
\end{table*}
In this section, we present a series of comprehensive experiments designed to validate the effectiveness of Spatio-Temporal Pruning (STP)
\subsection{Experimental Setup}
\label{subsec:setup}
We conduct our experiments based on {LLaDA-8B-Instruct}~\cite{nie2025large} model. To comprehensively assess reasoning capabilities, we evaluate performance on four challenging benchmarks: GSM8K~\cite{cobbe2021training} and MATH~\cite{hendrycks2021measuring} for mathematical reasoning, Countdown~\cite{tinyzero} for logical reasoning.

We compare our proposed method against the following representative reinforcement learning baselines for dLLMs: \begin{itemize}[nosep,leftmargin=1em,labelwidth=*,align=left]
\item \textbf{GRPO w/ ELBO}: A direct adaptation of GRPO~\cite{shao2024deepseekmath} for dLLMs. It employs the standard Evidence Lower Bound (ELBO) estimator as a surrogate for the intractable likelihood to compute policy gradients. \item \textbf{Diffu-GRPO}~\cite{zhao2025d1}: An efficiency-oriented algorithm that approximates the likelihood via a one-step unmasking strategy, reducing the computational overhead compared to ELBO-based estimation. \end{itemize}
\subsection{Implementation Details}
\paragraph{Training Configuration.}We conduct all experiments on NVIDIA A800 GPUs (80GB). Following the baselines, we employ Low-Rank Adaptation (LoRA) for parameter-efficient fine-tuning with rank $r = 128$ and scaling factor $\alpha = 64$. We optimize the model using AdamW with a learning rate of $3 \times 10^{-6}$.
Detailed hyperparameters are listed in Table~\ref{tab:hyperparams} in Appendix~\ref{app:exp_details}.
\paragraph{STP Configuration.}For our Spatio-Temporal Pruning (STP), we utilize the model's own pre-generations as the source for the static dataset. To prevent reward hacking, tokens enclosed within specific answer delimiters (e.g., \texttt{<answer>}... \texttt{</answer>}) are excluded from the fixed set. Unless otherwise stated, we set the spatial pruning ratio to $\gamma = 0.05$ (representing the proportion of fixed tokens) and the temporal pruning cutoff to $t_{\text{cutoff}} = 0.05$.
\paragraph{Optimization and Evaluation.}Our training objective adopts the reinforcement learning formulation from Difu-GRPO~\cite{zhao2025d1}, incorporating KL divergence penalty and gradient clipping for stability. We do not apply prompt masking during training. To ensure consistency, the generation sequence length is fixed to $L=256$ for both training and evaluation. We report performance on the final checkpoint.
\subsection{Main Results}
\label{subsec:main_results}
We report the GRPO training accuracy, total training wall-clock time (in seconds), and relative importvement with the best baseline (imporv.) on the evaluated reasoning benchmarks in Table~\ref{tab:main_results}. The results demonstrate that STP achieves a superior trade-off, surpassing the state-of-the-art in both dimensions simultaneously.


\textbf{Superior Computational Efficiency.} As shown in the bottom row of Table~\ref{tab:main_results}, STP consistently reduces training latency, even when compared to the efficiency-oriented \textit{Diffu-GRPO} baseline. We observe relative time reductions of \textbf{13.07\%} on MATH and \textbf{3.43\%} on Countdown. This confirms that STP's pruning strategy effectively eliminates redundant computations, making it the fastest training method among all compared approaches while maintaining full optimization capability.

\textbf{State-of-the-Art Effectiveness.} In terms of reasoning accuracy, STP outperforms the strong \textit{GRPO w/ ELBO} baseline across all tasks.On the {MATH} benchmark, STP achieves an accuracy of \textbf{36.20\%}, exceeding the best baseline by \textbf{2.0\%} in absolute terms. On {GSM8K}, our method maintains a steady advantage, pushing the accuracy to \textbf{80.97\%}. Most notably, on the {Countdown} task, STP achieves a transformative relative improvement of \textbf{81.72\%} (from 36.33\% to 66.02\%). This confirms that STP gains efficiency without compromising training effectiveness.
\subsection{Analysis of ELBO Variance Reduction}
\label{subsec:variance_analysis}
We theoretically analyzed that STP reduces the variance of the ELBO estimator, thereby stabilizing the policy gradient in Theorem~\ref{thm:variance_reduction} and Theorem~\ref{theorem:grpo_impact}. To empirically verify this, we tracked the estimation variance throughout the training process on the MATH dataset. We record 1000 training steps ELBO estimator values for each sampled trajectory and calculate the variance. We compare the variance distribution of the standard \textit{GRPO w/ ELBO} against \textit{STP}.
\begin{figure*}[ht]
    \centering
    \begin{subfigure}[b]{0.96\columnwidth}
        \centering
        \includegraphics[width=\linewidth]{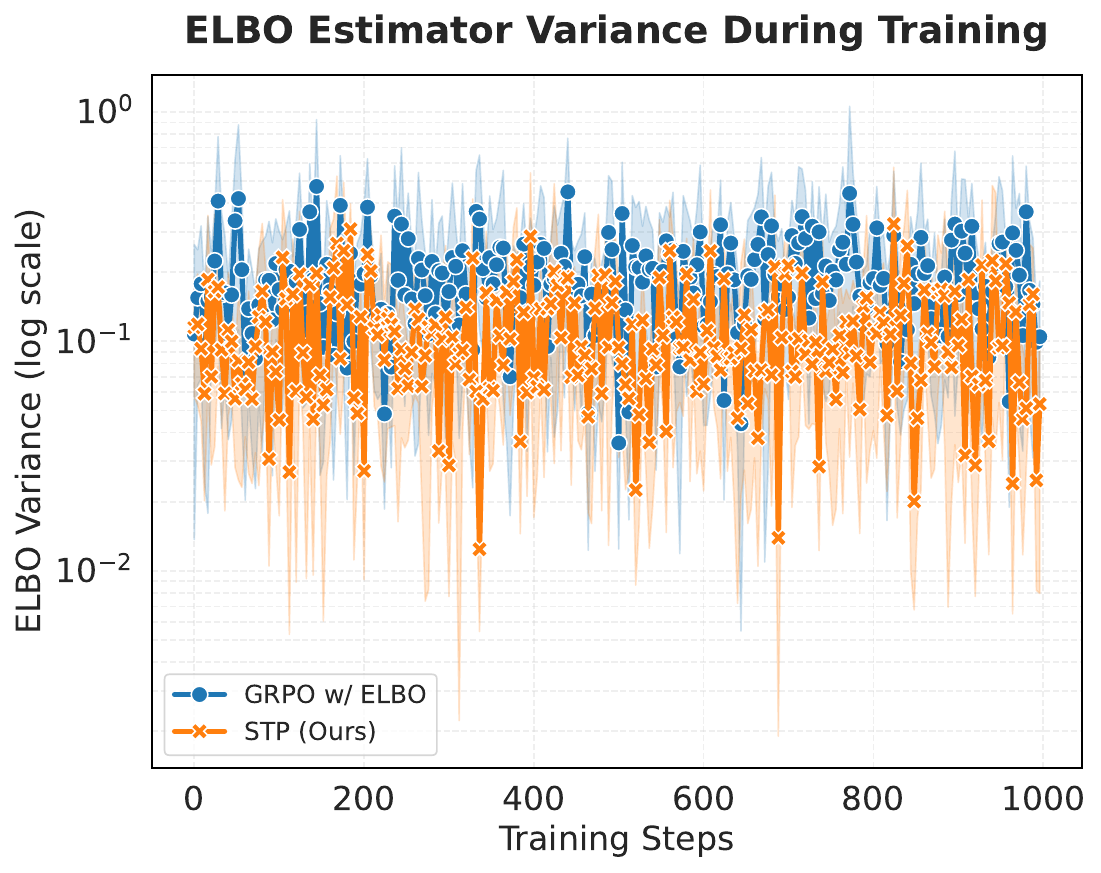}
        \caption{Variance over Training Steps}
        \label{fig:var_steps}
    \end{subfigure}
    \hfill
    \begin{subfigure}[b]{0.96\columnwidth}
        \centering
        \includegraphics[width=\linewidth]{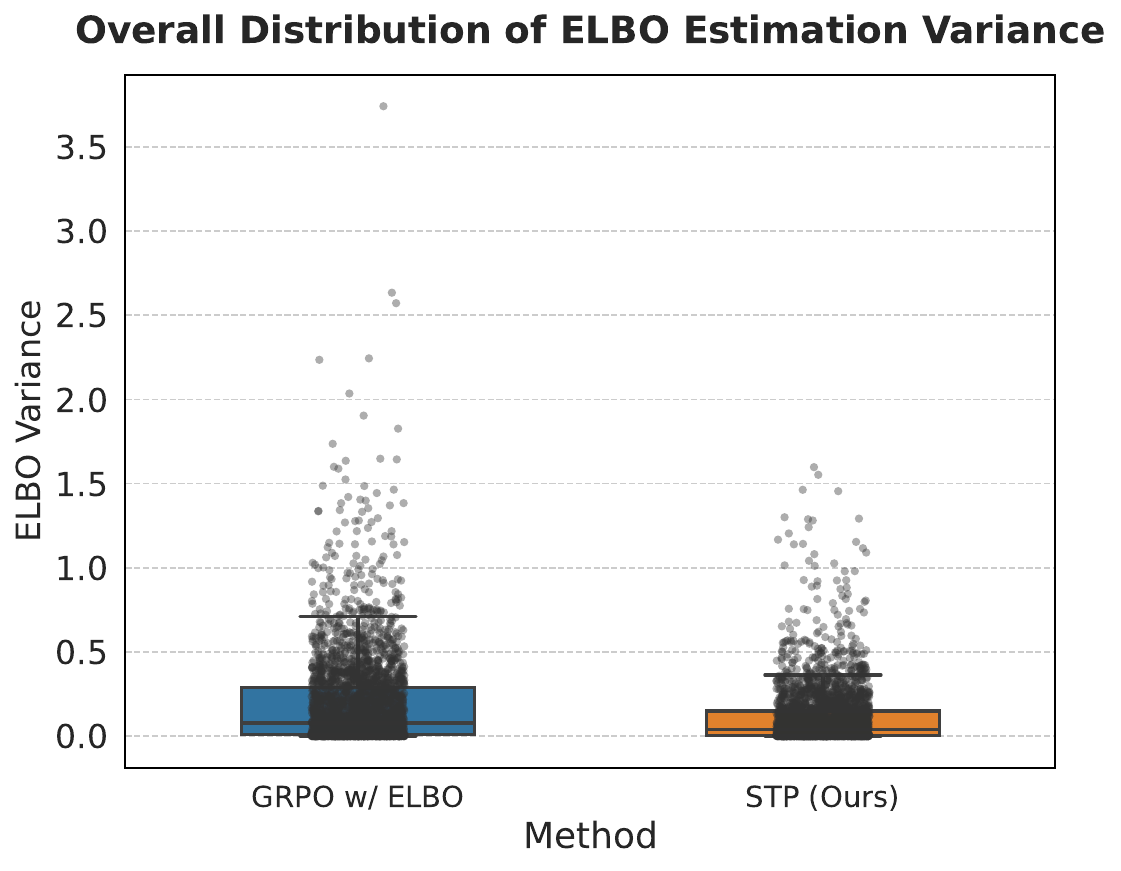}
        \caption{Variance Distribution}
        \label{fig:var_dist}
    \end{subfigure}
    \caption{\textbf{Empirical Verification of Variance Reduction.} (a) The ELBO estimation variance recorded during training dynamics. STP (Orange) consistently exhibits lower variance compared to the standard method GRPO w/ ELBO (Blue). (b) The aggregate distribution of variance shows that STP significantly lowers the median variance and suppresses extreme outliers, validating our theoretical bounds in Theorem~\ref{thm:variance_reduction}.}
    \label{fig:variance_analysis}
\end{figure*}
Figure~\ref{fig:variance_analysis} presents the results.

As shown in Fig.~\ref{fig:var_steps}, the standard ELBO estimator exhibits high and fluctuating variance throughout training, whereas STP maintains consistently low variance. This stability is crucial for importance-sampling-based methods like GRPO, as high-variance likelihood estimates can cause unstable gradient updates.

Furthermore, the boxplot in Fig.~\ref{fig:var_dist} confirms that the variance reduction achieved by STP is statistically significant. STP not only lowers the median variance but also substantially narrows the interquartile range and reduces outliers.

These empirical results are consistent with our theoretical analysis in Section~\ref{sec:theory}, and corroborate the performance improvements shown in Section~\ref{subsec:main_results}. This confirms that STP reduces the variance of the ELBO estimator, enabling more stable and efficient reinforcement learning.
\subsection{Compatibility with Other RL Algorithms}
\label{subsec:compatibility}

Since STP optimizes the fundamental trajectory sampling process on the spatio-temporal dimension, it functions as an orthogonal acceleration module compatible with other enhanced RL algorithms on dLLMs. To verify this, we integrated STP on the MATH dataset with Diffu-GRPO~\cite{zhao2025d1} and SPG~\cite{wang2025spg}. SPG mitigates gradient bias by leveraging both an upper and a lower bound of the true log-likelihood.

As shown in Table~\ref{tab:orthogonality}, STP consistently boosts both efficiency and performance. Notably, integrating STP with Diffu-GRPO yields a significant accuracy improvement (+1.6\%) and a $\sim$30\% speedup, suggesting that the high-quality priors from Spatial Pruning help correct the optimization direction. For {SPG}, STP further pushes the performance boundary to {36.60\%} while reducing training time by {7.5\%}. 
This confirms that STP is compatible with other enhanced RL algorithms on dLLMs, providing complementary benefits by reducing the sampling overhead and ELBO variance.
\begin{table}[t]
\caption{Compatibility of STP with advanced RL algorithms on MATH. STP consistently improves both accuracy and training speed when integrated with Diffu-GRPO and SPG.}
\label{tab:orthogonality}
\begin{center}
\begin{small}
\begin{sc}
\begin{tabular}{lccc}
\toprule
Method & Accuracy & Time (s) & Speedup \\
\midrule
Diffu-GRPO & 32.80\% & 113,358 & - \\
\textbf{+ STP} & \textbf{34.40\%} & \textbf{79,529} & \textbf{+29.8\%} \\
\midrule
SPG & 36.10\% & 89,525 & - \\
\textbf{+ STP} & \textbf{36.60\%} & \textbf{82,830} & \textbf{+7.5\%} \\
\bottomrule
\end{tabular}
\end{sc}
\end{small}
\end{center}
\end{table}
\subsection{STP as Guided Exploration: Leveraging Static Solutions}
\label{subsec:guided_exploration}
Beyond efficiency, STP mitigates the "cold start" exploration challenge in challenging tasks by injecting high-quality priors. Unlike standard RL which decodes from scratch, STP utilizes \textit{Spatial Pruning} to fix a subset of tokens based on static data, effectively acting as guided exploration akin to Supervised Fine-Tuning (SFT).

To validate this, we compare STP using self-generated priors versus ground-truth (GT) solutions in Table~\ref{tab:sft_benefit}. While STP with model generations already surpasses the baseline (+2.0\%) due to variance reduction, using GT solutions further boosts accuracy to 36.60\%. This demonstrates that STP successfully leverages external signals to navigate the vast solution space of difficult reasoning tasks, providing SFT-like benefits with negligible computational overhead (99k vs. 98k seconds).
\begin{table}[t]
\caption{Comparison of exploration sources on the MATH dataset. Using ground truth (GT) solutions yields the highest accuracy, demonstrating the SFT-like benefit of STP in guiding exploration for difficult tasks.}
\label{tab:sft_benefit}
\begin{center}
\begin{small}
\begin{sc}
\begin{tabular}{lccc}
\toprule
\multicolumn{1}{c}{Method} & Source & Acc. & Time (s) \\
\midrule
GRPO w/ ELBO & - & 34.20\% & 126,045 \\
STP (Ours) & Model Gens & 36.20\% & \textbf{98,537} \\
STP (Ours) & GT Solutions & \textbf{36.60\%} & 99,756 \\
\bottomrule
\end{tabular}
\end{sc}
\end{small}
\end{center}
\vskip -0.1in
\end{table}

\subsection{Ablation Study}
\label{subsec:ablation}
To rigorously understand the individual contributions of our proposed components and the impact of hyperparameter choices, we conduct a series of ablation studies on the MATH dataset.

\subsubsection{Impact of Pruning Components}
We isolate the effects of Spatial Pruning (SP) and Temporal Pruning (TP) to evaluate their distinct roles in the trade-off between efficiency and performance. Table~\ref{tab:ablation_components} presents the results. Applying Spatial Pruning alone improves accuracy over the baseline ($34.20\% \to 35.20\%$) and reduces training time by approximately 18\%, confirming that anchoring the generation with high-confidence static tokens stabilizes the exploration. Temporal Pruning alone yields the highest accuracy ($36.60\%$), suggesting that the early-exit mechanism effectively filters out the noisy tail of the diffusion process without compromising the quality of the generation. The combined STP framework achieves the lowest training time ($98,537$s, a $\sim$21.8\% speedup) while maintaining competitive accuracy ($36.20\%$), demonstrating that the two strategies are complementary: SP reduces the spatially  inefficient exploration, while TP reduces redundancy on temporal dimension.

\begin{table}[t]
\caption{Ablation of STP components on MATH. TP maximizes accuracy, while the combined STP offers the best efficiency-performance trade-off.}
\label{tab:ablation_components}
\begin{center}
\begin{small}
\begin{sc}
\begin{tabular}{lcccc}
\toprule
Method & SP & TP & Acc. & Time (s) \\
\midrule
GRPO w/ ELBO & $\times$ & $\times$ & 34.20\% & 126,045 \\
Spatial Only & $\checkmark$ & $\times$ & 35.20\% & 102,553 \\
Temporal Only & $\times$ & $\checkmark$ & \textbf{36.60\%} & 102,190 \\
{STP (Ours)} & $\checkmark$ & $\checkmark$ & {36.20\%} & \textbf{98,537} \\
\bottomrule
\end{tabular}
\end{sc}
\end{small}
\end{center}
\vskip -0.1in
\end{table}

\subsubsection{Hyperparameter Sensitivity}
We investigate the sensitivity of STP to two key hyperparameters: the \textit{Spatial pruning ratio} ($\gamma$) and the \textit{Temporal pruning cutoff} ($\tcut$). We compare aggressive pruning settings ($0.1$) against conservative ones ($0.05$). As shown in Table~\ref{tab:ablation_hyperparams}, setting $\gamma=0.1$ degrades performance to baseline levels ($34.20\%$), likely because an overly aggressive fixing strategy hurts the diversity of the generated sequences. Decreasing $\gamma$ to $0.05$ recovers accuracy to $35.20\%$. Similarly, for Temporal Pruning, a lower cutoff $\tcut=0.05$ yields better accuracy than $\tcut=0.1$. These results indicate that a conservative pruning strategy strikes a better balance, ensuring sufficient exploration and generation quality. We therefore adopt $0.05$ for both parameters in our main experiments.\looseness=-1

\begin{table}[h]
\caption{Sensitivity analysis of hyperparameters $\gamma$ and $\tcut$ on MATH. Conservative pruning ($0.05$) consistently outperforms aggressive pruning($0.1$).}
\label{tab:ablation_hyperparams}
\begin{center}
\begin{small}
\begin{sc}
\begin{tabular}{cccc}
\toprule
$\gamma$ & $\tcut$ & Acc. & Time (s) \\
\midrule
0.10 & 0.00 & 34.20\% & 100,935 \\
0.05 & 0.00 & 35.20\% & 102,553 \\
0.00 & 0.10 & 35.20\% & 102,165 \\
0.00 & 0.05 & \textbf{36.60\%} & 102,190 \\
0.05 & 0.05 & 36.20\% & \textbf{98,537} \\
\bottomrule
\end{tabular}
\end{sc}
\end{small}
\end{center}
\vskip -0.1in
\end{table}

\subsubsection{Spatial Pruning Strategy}
Finally, we examine the token selection strategy for Spatial Pruning by comparing \textit{Random Selection} against \textit{Low Confidence Retention}. In the latter strategy, we use tokens with high-confidence as fixed tokens. Table~\ref{tab:ablation_strategy} shows that Low Confidence Retention significantly outperforms Random Selection ($35.20\%$ vs. $34.60\%$). The superiority of this strategy stems from its ability to intelligently allocate the exploration budget: by fixing the ``easy" tokens that the model has already mastered, it forces the model to focus its exploration on positions with higher uncertainty, thereby optimizing the learning efficiency for complex reasoning steps.\looseness=-1

\begin{table}[h]
\caption{Comparison of Spatial Pruning strategies ($\gamma=0.05$). Retaining low-confidence tokens for exploration yields superior results.}
\label{tab:ablation_strategy}
\begin{center}
\begin{small}
\begin{sc}
\begin{tabular}{lcc}
\toprule
Strategy & Acc. & Time (s) \\
\midrule
Random & 34.60\% & \textbf{98,234} \\
Low Confidence (Ours) & \textbf{35.20\%} & 102,553 \\
\bottomrule
\end{tabular}
\end{sc}
\end{small}
\end{center}
\vskip -0.1in
\end{table}
\section{Related Work}

\paragraph{Diffusion Models for Text Generation.} Diffusion models have recently expanded from continuous domains to discrete text generation. 
Early approaches modeled text in continuous embedding spaces~\cite{li2022diffusion,gong2022diffuseq}.
Recently, Masked Diffusion Models (MDMs) that operate directly on discrete tokens
such as LLaDA~\cite{nie2025large} and Dream~\cite{ye2025dream}, demonstrate generation capabilities competitive with autoregressive models with superior efficiency. 

\paragraph{Reinforcement Learning for Diffusion Language Models.} While Reinforcement Learning (RL) algorithms like PPO and GRPO are standard for enhancing autoregressive models~\cite{schulman2017proximal,shao2024deepseekmath}, applying them to dLLMs is non-trivial due to the intractability of the exact likelihood. Recent advancements focus on adapting GRPO by approximating likelihoods concentrating on large time comsumption and high variance of the estimation: both DiffuGRPO~\cite{zhao2025d1} and UniGRPO~\cite{yang2025mmada} employ a one-step unmasking approximation to improve the efficiency of likelihoods estimation. To mitigate the high variance of the estimation, methods like wd1~\cite{tang2025wd1} and DiffuCoder~\cite{gong2025diffucoder} propose weighted objectives or coupled sampling schemes. 
Unlike these approaches which primarily focus on estimator formulation, our STP constrains the sampling trajectory in both spatial and temporal dimensions, reducing the variance of the estimation fundamentally while simultaneously lowering the computational cost.
\section{Conclusion and Future Work}
We propose STP, a principled framework that tackles the critical bottlenecks of efficiency and stability in Reinforcement Learning for dLLMs. By effectively identifying and pruning redundancy across both spatial and temporal dimensions, STP significantly reduces the computational overhead of trajectory sampling. More importantly, we theoretically and empirically demonstrated that STP lowers the variance of the ELBO estimator, providing a high-quality training signal that is essential for stable policy optimization. Our results on reasoning tasks confirm that STP not only accelerates training but also serves as a guided exploration mechanism, enabling the model to achieve state-of-the-art performance. Furthermore, STP is orthogonal to other algorithmic advancements, making it a versatile plug-and-play module for future dLLM research. Future work includes exploring adaptive scheduling strategies that dynamically adjust pruning based on instance difficulty and extending this paradigm to multimodal diffusion settings.
\section*{Impact Statements}
This paper presents work whose goal is to advance the field of machine learning. There are many potential societal consequences of our work, none of which we feel must be specifically highlighted here.
\bibliography{example_paper}
\bibliographystyle{icml2026}

\newpage
\appendix
\onecolumn

\section{Proofs of Theoretical Analysis}
\subsection{Proof of Theorem \ref{thm:variance_reduction}}
\label{app:proofs}
\begin{proof}
    
We analyze the variance of the ELBO estimator. Recall the standard estimator definition:
\begin{equation}
    \hat{\mathcal{B}}(y) = \frac{1}{K} \sum_{k=1}^K \ell(\theta, y, t_k, \epsilon_k)
\end{equation}
where $t_k \sim \mathcal{U}[0,1]$ and $\ell$ is the reweighted loss at step $t$. By the Law of Total Variance, we can decompose the variance into two terms: variance due to time step sampling ($\mathcal{V}_t$) and variance due to noise sampling conditioned on time ($\mathcal{V}_{noise}$):
\begin{equation}
    \mathbb{V}[\hat{\mathcal{B}}] = \mathbb{V}_t[\mathbb{E}_{\epsilon}[\ell|t]] + \mathbb{E}_t[\mathbb{V}_{\epsilon}[\ell|t]].
\end{equation}

\textbf{1. Effect of Spatial Pruning:}
In STP, we fix a set of tokens $S_{fixed}$. The loss function becomes a sum over only the non-fixed tokens $i \notin S_{fixed}$. Let $L$ be the sequence length and $L' = L - |S_{fixed}|$. Assuming token-wise losses have bounded covariance, the variance of the sum scales with the number of terms.
Let $\mathcal{L}_{full} = \sum_{i=1}^L \ell_i$ and $\mathcal{L}_{pruned} = \sum_{i \notin S_{fixed}} \ell_i$.
\begin{equation}
    \mathbb{V}_{\epsilon}[\mathcal{L}_{pruned}|t] \approx \frac{L'}{L} \mathbb{V}_{\epsilon}[\mathcal{L}_{full}|t] < \mathbb{V}_{\epsilon}[\mathcal{L}_{full}|t].
\end{equation}
Thus, spatial pruning directly reduces the second term $\mathbb{E}_t[\mathbb{V}_{\epsilon}[\ell|t]]$.

\textbf{2. Effect of Temporal Pruning:}
Tokens generated via temporal pruning are still treated as "generated" tokens. During the ELBO calculation for the RL update, these tokens follow the standard probabilistic formulation. Therefore, the decrease in variance strictly originates from Spatial Pruning.

Combining both effects, $\mathbb{V}[\hat{\mathcal{B}}^{STP}] < \mathbb{V}[\hat{\mathcal{B}}^{standard}]$.
\end{proof}

\subsection{Proof of Theorem \ref{theorem:grpo_impact}}
\label{proof2}
\begin{lemma}
\label{lemma1}
    (Bias and Variance of Exponential Transformation) Let $X$ be a random variable satisfying $|X| \le K$ with mean $\mu = \mathbb{E}[X]$ and variance $\mathbb{V}[X]$. Let $f(x) = e^x$. Then the transformed variable $Y = f(X)$ satisfies:
\begin{equation}
\left| \mathbb{E}[f(X)] - f(\mu) \right| \leq \frac{C_{\exp}}{2} \mathbb{V}[X]
\end{equation}
\begin{equation}
    \mathbb{V}[f(X)] \leq C_{\exp}^2 \mathbb{V}[X]
\end{equation}
where $C_{\exp} = e^K$ is the Lipschitz constant of $e^x$ on $[-K, K]$.
\end{lemma}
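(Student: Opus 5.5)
The plan is to prove both inequalities of Lemma~\ref{lemma1} using only two facts about $f(x)=e^x$ on the interval $[-K,K]$: its first and second derivatives are bounded by $C_{\exp}=e^K$ there. Since $|X|\le K$ almost surely, we also have $\mu=\mathbb{E}[X]\in[-K,K]$. Hence every point between $X$ and $\mu$ lies in $[-K,K]$, so all intermediate points produced by Taylor's theorem stay in the region where these derivative bounds hold.

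\emph{Bias bound.} I would expand $f$ around $\mu$ using the Lagrange form of the remainder:
\[
f(X)=f(\mu)+f'(\mu)(X-\mu)+\tfrac{1}{2}f''(\xi)(X-\mu)^2,
\]
where $\xi$ lies between $X$ and $\mu$, so $\xi\in[-K,K]$. Taking expectations, the linear term vanishes because $\mathbb{E}[X-\mu]=0$. This leaves
\[
\mathbb{E}[f(X)]-f(\mu)=\tfrac{1}{2}\,\mathbb{E}\big[e^{\xi}(X-\mu)^2\big].
\]
The right-hand side is nonnegative, and since $0<e^{\xi}\le e^K$ it is at most $\tfrac{e^K}{2}\mathbb{V}[X]$. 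This gives the first inequality.

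\emph{Variance bound.} I would use the variational characterization of variance: $\mathbb{V}[Y]\le\mathbb{E}[(Y-c)^2]$ for every constant $c$. Choosing $c=f(\mu)$ gives
\[
\mathbb{V}[f(X)]\le\mathbb{E}\big[(f(X)-f(\mu))^2\big].
\]
By the mean value theorem, $|f(X)-f(\mu)|=e^{\eta}|X-\mu|\le e^K|X-\mu|$ for some $\eta\in[-K,K]$. Squaring and taking expectations yields $\mathbb{V}[f(X)]\le C_{\exp}^2\,\mathbb{V}[X]$.

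The only point I expect to need care is the measurability of the intermediate points $\xi$ and $\eta$, which depend on $X$. I would avoid this issue by never treating them as random variables. Instead I would use the pointwise deterministic bounds $0\le f(x)-f(\mu)-f'(\mu)(x-\mu)\le\tfrac{e^K}{2}(x-\mu)^2$ and $|f(x)-f(\mu)|\le e^K|x-\mu|$, valid for all $x\in[-K,K]$, and then integrate these inequalities. Beyond that the argument is routine. The boundedness hypothesis, which in the paper comes from Assumption~\ref{assumption:elbo}, is exactly what makes the constant $C_{\exp}$ uniform.
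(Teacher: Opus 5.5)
Your proposal is correct and follows the paper's proof essentially step for step: the bias bound comes from a second-order Taylor expansion around $\mu$ with the Lagrange remainder bounded by $e^K$, and the variance bound comes from $\mathbb{V}[f(X)]\le\mathbb{E}[(f(X)-f(\mu))^2]$ together with the $e^K$-Lipschitz property on $[-K,K]$. Your additions (noting that $\mu\in[-K,K]$, the nonnegativity of the bias, and replacing the random intermediate points with pointwise deterministic inequalities before integrating) make the same argument slightly more rigorous but do not change it.
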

\begin{proof}
Consider the second-order Taylor expansion of $f(X)$ around $\mu$:
$$f(X) = f(\mu) + f'(\mu)(X - \mu) + \frac{1}{2}f''(\xi)(X - \mu)^2$$
where $\xi$ lies between $X$ and $\mu$. Taking the expectation on both sides:
$$\mathbb{E}[f(X)] = f(\mu) + f'(\mu)\underbrace{\mathbb{E}[X - \mu]}_{0} + \frac{1}{2}\mathbb{E}[f''(\xi)(X - \mu)^2]$$
Since $f(x) = e^x$, we have $f''(x) = e^x$. Given the boundedness assumption $|X| \le K$, implies $|\xi| \le K$, thus $|f''(\xi)| \le e^K = C_{\exp}$.
$$\left| \mathbb{E}[f(X)] - f(\mu) \right| = \left| \frac{1}{2}\mathbb{E}[f''(\xi)(X - \mu)^2] \right| \leq \frac{C_{\exp}}{2} \mathbb{E}[(X - \mu)^2] = \frac{C_{\exp}}{2} \mathbb{V}[X]$$

Since $f(x) = e^x$ is continuously differentiable and bounded on $[-K, K]$, it is Lipschitz continuous with constant $L = \sup_{z \in [-K, K]} |f'(z)| = e^K = C_{\exp}$.
Using the standard property of Lipschitz functions on variance:
$$\mathbb{V}[f(X)] = \mathbb{E}[(f(X) - \mathbb{E}[f(X)])^2] \leq \mathbb{E}[(f(X) - f(\mu))^2]$$
Since $f$ is $C_{\exp}$-Lipschitz:
$$|f(X) - f(\mu)| \leq C_{\exp} |X - \mu|$$
Squaring both sides and taking expectations:
$$\mathbb{E}[(f(X) - f(\mu))^2] \leq C_{\exp}^2 \mathbb{E}[(X - \mu)^2] = C_{\exp}^2 \mathbb{V}[X]$$
Thus, $\mathbb{V}[f(X)] \leq C_{\exp}^2 \mathbb{V}[X]$.
\end{proof}

\begin{lemma}
\label{lemma2}
    (Lipschitz Continuity of GRPO Objective) Let $r$ be the importance ratio and $A$ be the advantage. The per-sample GRPO objective function $g(r; A) = \min(rA, \text{clip}(r, 1-\epsilon, 1+\epsilon)A)$ is Lipschitz continuous with respect to $r$. Specifically, for a fixed advantage $A$, there exists a constant $C_{\text{clip}} = |A|$ such that:
    \begin{equation}
        |g(r_1; A) - g(r_2; A)| \leq C_{\text{clip}} |r_1 - r_2|
    \end{equation}
\end{lemma}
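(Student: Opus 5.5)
We prove Lemma~\ref{lemma2} by viewing $g(\cdot;A)$ as a piecewise-linear function of $r$ and bounding the slope of each piece. The natural approach is to treat the two signs of the advantage separately, since the $\min$ selects different branches depending on $\operatorname{sign}(A)$. The case $A=0$ is trivial: $g\equiv 0$ and the bound holds with $C_{\text{clip}}=0$.

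\emph{Case $A>0$.} Here $\min(rA,\operatorname{clip}(r,1-\epsilon,1+\epsilon)A)=A\min(r,\operatorname{clip}(r,1-\epsilon,1+\epsilon))$. We compute this explicitly. For $r\le 1+\epsilon$ the minimum equals $r$: below $1-\epsilon$ the clip gives $1-\epsilon>r$, and in between the two coincide. For $r>1+\epsilon$ it equals $1+\epsilon$. Hence $g(r;A)=A\min(r,1+\epsilon)$.

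\emph{Case $A<0$.} Factoring out $A$ turns $\min$ into $\max$, so $g(r;A)=A\max(r,\operatorname{clip}(r,1-\epsilon,1+\epsilon))=A\max(r,1-\epsilon)$.

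In both cases $g$ is $A$ times a composition of $r\mapsto\min(r,c)$ or $r\mapsto\max(r,c)$. Each of these maps is $1$-Lipschitz, since $|\min(r_1,c)-\min(r_2,c)|\le|r_1-r_2|$ and likewise for $\max$. Multiplying by $A$ therefore gives $|g(r_1;A)-g(r_2;A)|\le |A|\,|r_1-r_2|$. This yields the claim with $C_{\text{clip}}=|A|$.

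There is no real obstacle here. The only care needed is the sign bookkeeping when pulling $A$ out of the $\min$. A more robust alternative avoids the case split entirely: use that $\operatorname{clip}$ is $1$-Lipschitz, that $r\mapsto rA$ and $r\mapsto\operatorname{clip}(r,\cdot,\cdot)A$ are each $|A|$-Lipschitz, and that the pointwise minimum of two $|A|$-Lipschitz functions is $|A|$-Lipschitz.
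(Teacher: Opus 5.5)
Your proof is correct, but your main argument takes a different route from the paper. The paper never resolves the $\min$. It notes that $r\mapsto rA$ and $r\mapsto \operatorname{clip}(r,1-\epsilon,1+\epsilon)A$ are each $|A|$-Lipschitz (the latter with slope $0$ or $|A|$ on each piece). It then uses the fact that a pointwise minimum of two Lipschitz functions is Lipschitz with the larger of the two constants. That is exactly the ``more robust alternative'' you sketch at the end.

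Your case split on the sign of $A$ instead computes closed forms: $g(r;A)=A\min(r,1+\epsilon)$ for $A>0$ and $g(r;A)=A\max(r,1-\epsilon)$ for $A<0$. Both are correct.

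The paper's route is shorter and needs no sign bookkeeping. Yours costs a few lines but tells you more:
\begin{itemize}
\item $g$ is monotone in $r$.
\item $g$ is flat on only one side of the clipping interval: above $1+\epsilon$ when $A>0$, below $1-\epsilon$ when $A<0$.
\item The constant $|A|$ is sharp, since the slope equals $A$ on an unbounded interval on the unclipped side.
\end{itemize}
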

\begin{proof}
The function $g(r; A)$ is a composition of linear scaling, clipping, and the minimum operator.
\begin{itemize}
    \item The function $h_1(r) = rA$ is Lipschitz with constant $|A|$.
    \item he function $h_2(r) = \text{clip}(r, 1-\epsilon, 1+\epsilon)A$ is Lipschitz with constant 0 (in clipped regions) or $|A|$ (in unclipped regions), hence bounded by $|A|$.
    \item The minimum of two Lipschitz functions is Lipschitz with a constant equal to the maximum of their Lipschitz constants.
\end{itemize}
Thus, $g(r; A)$ is $|A|$-Lipschitz continuous w.r.t $r$. 
\end{proof}

\begin{proof}
    (Theorem \ref{theorem:grpo_impact}: Bias and Variance in GRPO Estimates)

    For a single sample tuple $(x, y)$ with advantage $A$, let $\hat{r} = e^{\Delta \hat{\mathcal{B}}}$ and $r^* = e^{\Delta \mathcal{B}}$. The loss is $L(\hat{r}) = g(\hat{r}; A)$.
    
    \textbf{1. Bias Analysis:} We aim to bound $|\mathbb{E}[L(\hat{r})] - L(r^*)|$. By triangle inequality:
$$|\mathbb{E}[L(\hat{r})] - L(r^*)| \leq \underbrace{|\mathbb{E}[L(\hat{r})] - L(\mathbb{E}[\hat{r}])|}_{\text{Jensen gap of } L} + \underbrace{|L(\mathbb{E}[\hat{r}]) - L(r^*)|}_{\text{Bias from exponential}}$$
Using Lemma~\ref{lemma2}, the first term is bounded:
$|\mathbb{E}[L(\hat{r})] - L(\mathbb{E}[\hat{r}])| \leq C_{\text{clip}} \mathbb{E}[|\hat{r} - \mathbb{E}[\hat{r}]|] \leq C_{\text{clip}} \sqrt{\mathbb{V}[\hat{r}]}$.
From Lemma~\ref{lemma1}, $\mathbb{V}[\hat{r}] \leq C_{\exp}^2 \mathbb{V}[\Delta \hat{\mathcal{B}}]$.
So, Term 1 $\leq C_{\text{clip}} C_{\exp} \sqrt{\mathbb{V}[\Delta \hat{\mathcal{B}}]}$.

For the second term, using the Lipschitz property of $L$:
$|L(\mathbb{E}[\hat{r}]) - L(r^*)| \leq C_{\text{clip}} |\mathbb{E}[\hat{r}] - r^*|$.
Recall $r^* = e^{\Delta \mathcal{B}} = e^{\mathbb{E}[\Delta \hat{\mathcal{B}}]}$ (since $\Delta \hat{\mathcal{B}}$ is unbiased~\cite{zhu2025llada}).
This is exactly the bias of the exponential function bounded in Lemma~\ref{lemma1}:
$|\mathbb{E}[e^{\Delta \hat{\mathcal{B}}}] - e^{\mathbb{E}[\Delta \hat{\mathcal{B}}]}| \leq \frac{C_{\exp}}{2} \mathbb{V}[\Delta \hat{\mathcal{B}}]$.
So, Term 2 $\leq \frac{C_{\text{clip}} C_{\exp}}{2} \mathbb{V}[\Delta \hat{\mathcal{B}}]$.

Combining both terms gives the bias bound dependent on $\mathbb{V}[\Delta \hat{\mathcal{B}}]$ and $\sqrt{\mathbb{V}[\Delta \hat{\mathcal{B}}]}$.

\textbf{2. Variance Analysis:}
Using the law of total variance:
$$\mathbb{V}[\hat{L}] = \mathbb{V}_{\text{data}}[\mathbb{E}_{\text{est}}[\hat{L}]] + \mathbb{E}_{\text{data}}[\mathbb{V}_{\text{est}}[\hat{L}]]$$
Focusing on the estimation variance term $\mathbb{V}_{\text{est}}[\hat{L}]$:
Since $L$ is $C_{\text{clip}}$-Lipschitz w.r.t $r$:
$$\mathbb{V}_{\text{est}}[L(\hat{r})] \leq C_{\text{clip}}^2 \mathbb{V}_{\text{est}}[\hat{r}]$$
From Lemma~\ref{lemma1}, $\mathbb{V}_{\text{est}}[\hat{r}] \leq C_{\exp}^2 \mathbb{V}[\Delta \hat{\mathcal{B}}]$.
Thus, the total variance is bounded by the intrinsic data variance plus a term proportional to $\mathbb{V}[\Delta \hat{\mathcal{B}}]$.
This confirms that reducing the variance of the ELBO estimator ($\mathbb{V}[\Delta \hat{\mathcal{B}}]$) directly tightens the bounds on both bias and variance of the GRPO objective.
\end{proof}

\section{Detailed Hyperparameters}
\label{app:exp_details}
Table~\ref{tab:hyperparams} lists the detailed hyperparameters used in our experiments. We largely align our settings with SPG~\cite{wang2025spg}, while adopting the KL penalty coefficients from Diffu-GRPO~\cite{zhao2025d1}.
\begin{table}[h]
\centering
\caption{Hyperparameters for Training.}
\label{tab:hyperparams}
\begin{tabular}{lc}
\toprule
\textbf{Hyperparameter} & \textbf{Value} \\
\midrule
Base Model & LLaDA-8B-Instruct \\
LoRA Rank ($r$) & 128 \\
LoRA Alpha ($\alpha$) & 64 \\
Temperature & 0.9 \\
Learning Rate & $3 \times 10^{-6}$ \\
Optimizer & AdamW \\
$\beta_1$ & 0.9 \\
$\beta_2$ & 0.99 \\
Weight Decay & 0.1 \\
Batch Size & 6 \\
Gradient Accumulation Steps & 2 \\
Inner Updates ($\mu$) & 4 \\
Clip Ratio ($\epsilon$) & 0.2 \\
KL Coefficient ($\beta$) & 0.04 \\
Prompt Masking ($p_{mask}$) & 0.0 \\
Number of Completions per Prompt & 6 \\
Number of Monte Carlo Estimation Samples & 3 \\
Sequence Length ($L$) & 256 \\
Spatial Pruning Ratio ($\gamma$) & 0.05 \\
Temporal Pruning Cutoff ($t_{\text{cutoff}}$) & 0.05 \\
Static Source & Model Pre-generations \\
\bottomrule
\end{tabular}
\end{table}

\end{document}